\newtheorem{theorem}{Theorem}[section]
\newtheorem{observation}{Observation}[theorem]
\newtheorem{corollary}{Corollary}[theorem]
\definecolor{mintleaf}{RGB}{0, 184, 148}
\definecolor{dm-blue-500}{RGB}{0, 69, 177}
\definecolor{dm-purple-500}{RGB}{105,50,230}
\definecolor{mysilver}{RGB}{128,129,128}
\definecolor{my_green}{RGB}{0, 176, 80}
\definecolor{my_yellow}{RGB}{255,165,0}
\definecolor{my_red}{RGB}{255, 0, 0}
\definecolor{self_green}{RGB}{106, 142, 64}
\definecolor{self_red_light}{RGB}{210, 131, 120}
\definecolor{self_red_dark}{RGB}{177, 70, 55}
\definecolor{self_blue}{RGB}{56,92,167}
\definecolor{self_blue_light}{RGB}{74,150,252}
\definecolor{my_blue}{RGB}{49, 133, 155}
\definecolor{case_purple}{RGB}{160, 43, 147}
\definecolor{case_blue}{RGB}{15, 158, 213}
\title{Direct Multi-Turn Preference Optimization for Language Agents}
\author{Wentao Shi$^{1}$\thanks{Both authors contributed equally to this research.}~~Mengqi Yuan$^{1}$\footnotemark[1]~~Junkang Wu$^1$~~Qifan Wang$^2$~~Fuli Feng$^{1}$\thanks{Corresponding Author}\\[10pt]
$^1$University of Science and Technology of China \quad
$^2$Meta AI \\ [5pt]
{\small{\texttt{\{shiwentao123, yuanmengqi\}@mail.ustc.edu.cn \quad wqfcr@fb.com \quad \{jkwu0909,fulifeng93\}@gmail.com}}}
}
\begin{document}
\maketitle
\begin{abstract}
Adapting Large Language Models (LLMs) for agent tasks is critical in developing language agents. Direct Preference Optimization (DPO) is a promising technique for this adaptation with the alleviation of compounding errors, offering a means to directly optimize Reinforcement Learning (RL) objectives. However, applying DPO to multi-turn tasks presents challenges due to the inability to cancel the partition function. Overcoming this obstacle involves making the partition function independent of the current state and addressing length disparities between preferred and dis-preferred trajectories. In this light, we replace the policy constraint with the state-action occupancy measure constraint in the RL objective and add length normalization to the Bradley-Terry model, yielding a novel loss function named DMPO for multi-turn agent tasks with theoretical explanations. Extensive experiments on three multi-turn agent task datasets confirm the effectiveness and superiority of the DMPO loss. The code is available at \url{https://github.com/swt-user/DMPO}.
\end{abstract}

\section{Introduction}

% Pursuing generalist agents capable of solving and adapting to complex tasks has been a primary objective in the artificial intelligence community~\cite{DBLP:journals/tmlr/ReedZPCNBGSKSEBREHCHVBF22, DBLP:journals/corr/abs-2404-10179}. \textit{Language agents}~\cite{DBLP:conf/iclr/YaoZYDSN023} become an emerging research direction in recent years owing to the significant potential of Large Language Models in tackling intricate tasks through instruction following~\cite{DBLP:conf/nips/Ouyang0JAWMZASR22}, action planning~\cite{DBLP:conf/icml/HuangAPM22}, and tool utilization~\cite{DBLP:conf/nips/SchickDDRLHZCS23}. 
% Nevertheless, there exists significant potential for future advancements of language agents due to the substantial gap between the pretraining task of LLMs and agent tasks.
%
Developing generalist agents capable of solving complex tasks has been a central goal in the artificial intelligence community~\cite{DBLP:journals/tmlr/ReedZPCNBGSKSEBREHCHVBF22, DBLP:journals/corr/abs-2404-10179}. Recently, \textit{Language agents}~\cite{DBLP:conf/iclr/YaoZYDSN023} emerge as a prominent research direction, leveraging the considerable potential of Large Language Models to address intricate tasks involving instruction following~\cite{DBLP:conf/nips/Ouyang0JAWMZASR22}, action planning~\cite{DBLP:conf/icml/HuangAPM22}, and tool utilization~\cite{DBLP:conf/nips/SchickDDRLHZCS23}. 
Nevertheless, the substantial disparity between the pretraining task of Large Language Models and the requirements of agent tasks suggests significant potential for future advancements in language agent capabilities.

% Behavioral Cloning (BC) is a commonly used approach to bridge the domain gap by fine-tuning LLMs through expert agent trajectories. For instance, recent efforts on BC perform Supervised Fine-tuning of LLMs on optimal state-action pairs. While these methods can rapidly adapt LLMs to agent tasks, BC is significantly affected by compounding errors - minor errors of the learner will accumulate along interactions between the agent and environment, leading to performance deterioration in non-deterministic environments.
%
Behavioral Cloning (BC)~\cite{DBLP:journals/neco/Pomerleau91} is a frequently employed approach to bridge the domain gap by fine-tuning LLMs through expert agent trajectories. Recent endeavors in BC~\cite{DBLP:journals/corr/abs-2310-05915, DBLP:journals/corr/abs-2310-12823, DBLP:journals/corr/abs-2311-05657} involve the Supervised Fine-tuning of LLMs on optimal state-action pairs. Although these methods enable swift adaptation of LLMs to agent tasks, BC is notably susceptible to \textit{compounding errors} --- minor errors of the learner accumulate along interactions between the agent and environment, leading to performance deterioration in non-deterministic environments~\cite{DBLP:journals/jmlr/RossGB11}. 

% In alleviating the compounding errors, Direct Preference Optimization~\cite{DBLP:conf/nips/RafailovSMMEF23} has achieved remarkable success in the single-turn preference alignment task due to its simple implementation and robustness. DPO optimizes the Reinforcement Learning (RL) objectives by maximizing the likelihood of preferred response over dis-preferred response, alleviating the necessity for continuous interaction with the environment and the training instability of traditional RL algorithms~\cite{DBLP:journals/corr/abs-2312-14878, DBLP:journals/corr/abs-2404-01663}. \citet{DBLP:journals/corr/abs-2403-02502} has employed the DPO loss to enhance LLMs' performance in agent tasks. However, the DPO loss is tailored specifically for the single-turn bandit setting and is ill-suited for multi-turn agent tasks, leading to suboptimal performance.
%
In alleviating compounding errors, Direct Preference Optimization~\cite{DBLP:conf/nips/RafailovSMMEF23} has demonstrated remarkable success in the single-turn preference alignment task due to its simple implementation and robustness. DPO optimizes RL objectives by maximizing the likelihood of preferred responses over dis-preferred responses, mitigating the need for continuous interaction with the environment and the training instability commonly associated with traditional RL algorithms~\cite{DBLP:journals/corr/abs-2312-14878, DBLP:journals/corr/abs-2404-01663}. Although there has been an initial endeavor to apply the DPO loss on LLMs for agent tasks~\cite{DBLP:journals/corr/abs-2403-02502}, it encounters suboptimal performance, as it is tailored specifically for the single-turn bandit setting and is ill-suited for multi-turn agent tasks.

\begin{figure}[t]
  \centering
  \includegraphics[width=0.97\linewidth]{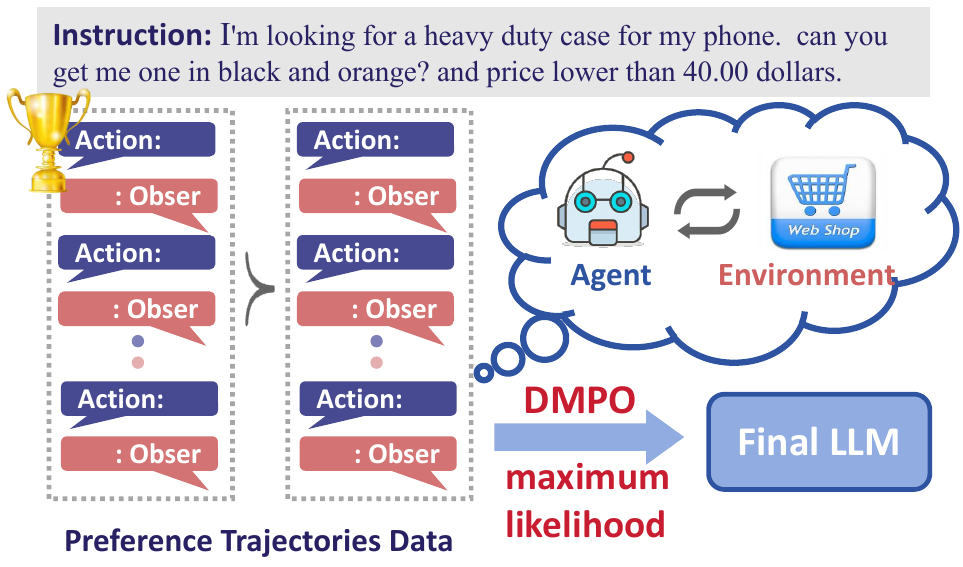}
  \caption{Illustration of DMPO loss, which directly optimizes the RL objective by maximizing the likelihood of the preferred trajectory over the dispreferred trajectory.}
  \label{fig:DMPO}
  \vspace{-5pt}
\end{figure}
%This work aims to develop a robust loss function capable of directly optimizing RL objectives in multi-turn scenarios. The key to achieving this goal lies in eliminating the partition function in the Bradley-Terry (BT) model~\cite{19ff28b9-64f9-3656-ba40-08326a05748e, DBLP:conf/nips/ChristianoLBMLA17}. To achieve this, it is necessary to ensure that the partition function is independent of the current state and to eliminate the effect of the length difference between preferred and dis-preferred trajectories. In this light, we replace the policy constraint with the state-action occupancy measure (SAOM) constraint in the RL objective and incorporate length normalization into the BT model. These modifications allow us to obtain a new and simple loss function DMPO for multi-turn agent tasks. The state-action occupancy measure of a policy is the expected discounted number of times a state-action couple is visited in a trajectory~\cite{DBLP:journals/ijon/JohnsonLC00}. Previous studies~\cite{DBLP:conf/nips/XuLY20, DBLP:conf/corl/GhasemipourZG19} have demonstrated that the SAOM constraint has the advantage in reducing the compounding error compared to the policy constraint. Furthermore, the derivation provides a theoretical explanation for the effectiveness of the widely used length normalization technique~\cite{meng2024simpo}. 
%
This work aims to develop a robust loss function capable of directly optimizing RL objectives in multi-turn scenarios. The crux of this pursuit involves eliminating the partition function in the Bradley-Terry (BT) model~\cite{19ff28b9-64f9-3656-ba40-08326a05748e, DBLP:conf/nips/ChristianoLBMLA17}. This entails ensuring the partition function's independence from the current state and neutralizing the impact of the length disparity between preferred and dis-preferred trajectories. To achieve this, we substitute the policy constraint with the state-action occupancy measure (SAOM)~\cite{DBLP:journals/ijon/JohnsonLC00} constraint in the RL objective and introduce length normalization into the BT model. These adjustments culminate in the development of a new and simple loss function DMPO for multi-turn agent tasks. As shown in Figure~\ref{fig:DMPO}, DMPO directly optimizes the RL objective by maximizing the likelihood of preferred ("win") trajectory over dis-preferred ("lose") trajectory. Notably, the SAOM constraint has advantages in mitigating compounding errors compared to the policy constraint~\cite{DBLP:conf/nips/XuLY20, DBLP:conf/corl/GhasemipourZG19}. Furthermore, the derivation offers a theoretical rationale for the efficacy of the length normalization technique in DPO loss~\cite{meng2024simpo}. 

To summarize, our contributions are threefold:
\begin{itemize}[leftmargin=*]
    \item  We introduce a new loss function called DMPO, which directly optimizes RL objectives in multi-turn scenarios, thereby mitigating the compounding errors associated with BC methods.
    \item We provide a theoretical explanation for the efficacy of the length normalization technique, illustrating how it cancels out the partition function in the BT model and improves performance.
    \item Extensive experiments on three multi-turn agent task datasets validate the effectiveness and the superiority of the DMPO loss function.
\end{itemize}
    % \item We introduce a simple loss function called DMPO, aimed at optimizing the RL objective directly in multi-turn scenarios, thereby mitigating the compounding errors associated with BC methods.
% $\bullet$ We introduce a new loss function called DMPO, which directly optimizes RL objectives in multi-turn scenarios, thereby mitigating the compounding errors associated with BC methods. \par
% $\bullet$ We provide a theoretical explanation for the efficacy of the length normalization technique, illustrating how it cancels out the partition function in the BT model and improves performance. \par
%  $\bullet$ Extensive experiments on three multi-turn agent task datasets validate the effectiveness and the superiority of the DMPO loss function.\par

\section{Related Work}
In this section, we first introduce the in-context learning methods and fine-tuning methods of language agents and then review the literature in preference-based RL.
\paragraph{In-Context Learning}
% An agent refers to an entity capable of implementing a complementary style of interaction, perceiving the environment, and taking actions~\cite{DBLP:journals/cacm/Maes94, DBLP:journals/ker/WooldridgeJ95}. With the rapid advancement in the capabilities of LLMs~\cite{openai2024gpt4}, researchers have employed LLMs as the core controllers, equipped with memory modules~\cite{DBLP:journals/corr/abs-2404-13501}, toolkits~\cite{qu2024tool}, and various workflows~\cite{DBLP:journals/corr/abs-2309-02427}, to build language agents for various real-world domains. However, these existing in-context learning methods fail to fully exploit the potential of LLMs, since most LLMs are not specifically trained for agent tasks.
Inspired by the superior in-context learning capabilities of LLMs~\cite{openai2024gpt4}, researchers have designed various instruction prompts for LLMs, equipped with memory modules~\cite{DBLP:journals/corr/abs-2404-13501}, toolkits~\cite{qu2024tool}, and various workflows~\cite{DBLP:journals/corr/abs-2309-02427}, to build language agents for various real-world domains. ReAct~\cite{DBLP:conf/iclr/YaoZYDSN023} incorporates CoT reasoning~\cite{DBLP:conf/nips/Wei0SBIXCLZ22} into action generation. Reflexion~\cite{DBLP:conf/nips/ShinnCGNY23} and PROMST~\cite{DBLP:journals/corr/abs-2402-08702} refine the prompt using environment feedback. However, these in-context learning methods fail to fully exploit the potential of LLMs, since most LLMs are not specifically trained for agent tasks. This work focuses on adapting the LLMs to agent tasks through fine-tuning.

\paragraph{Agent Tuning}
Recent studies, including FireAct~\cite{DBLP:journals/corr/abs-2310-05915}, AgentTuning~\cite{DBLP:journals/corr/abs-2310-12823}, Lumos~\cite{DBLP:journals/corr/abs-2311-05657},  MIMIR~\cite{DBLP:journals/corr/abs-2404-04285}, AUTOACT~\cite{DBLP:journals/corr/abs-2401-05268}, and $\alpha$-UMi~\cite{DBLP:journals/corr/abs-2401-07324} supervised fine-tuning LLMs with self-instruct or expert trajectories. However, such BC approaches suffer from compounding errors when interacting with dynamic environments. Taking a step further, Pangu~\cite{DBLP:journals/corr/abs-2312-14878} and CMAT~\cite{DBLP:journals/corr/abs-2404-01663} utilize RL technologies to further fine-tune the LLMs, which may result in a complex and unstable training procedure. To simplify the procedure, ETO~\cite{DBLP:journals/corr/abs-2403-02502} and EMMA~\cite{DBLP:journals/corr/abs-2311-16714} directly employ the DPO loss~\cite{DBLP:conf/nips/RafailovSMMEF23} to optimize the RL objective for the agent task. Nevertheless, the DPO loss is designed for single-turn bandit settings and is ill-suited for multi-turn scenarios. Along this line, this work extends the DPO loss in multi-turn scenarios and derives the DMPO loss.

% \paragraph{Imitation Learning}
% Imitation Learning is a framework for learning a behavior policy from demonstrations~\cite{DBLP:journals/csur/HusseinGEJ17, DBLP:journals/corr/abs-2404-19456}. It encompasses two major algorithmic paradigms: Behavioral Cloning (BC)~\cite{DBLP:journals/neco/Pomerleau91} and Adversarial Imitation Learning (AIL)~\cite{DBLP:conf/icml/PieterN04, DBLP:conf/nips/HoE16, DBLP:conf/nips/SyedS07}. BC minimizes the discrepancy between the agent's policy and the expert's policy and mimics the expert's policy through supervised learning. However, BC suffers from severe compounding errors~\cite{DBLP:journals/jmlr/RossB10, DBLP:journals/jmlr/RossGB11}. AIL utilizes the criterion of state-action occupancy measure (SAOM) matching to derive their algorithms, which facilitates the alleviation of the compounding error. Inspired by imitation learning, we substitute the policy constraint with the SAOM constraint in the RL objective. This modification helps eliminate the partition function and reduces the compounding error in our DMPO loss function.

\paragraph{Preference-Based RL} In multi-turn scenarios, preference-based RL typically starts by explicitly learning a reward function from preference data and then optimizing it~\cite{DBLP:journals/ml/FurnkranzHCP12, DBLP:conf/nips/ChristianoLBMLA17, DBLP:conf/corl/HejnaS22, DBLP:journals/corr/abs-2107-09251}. However, this two-stage learning process presents challenges regarding training efficiency and instability. This work instead presents a single-stage policy learning approach using DMPO loss that directly optimizes a policy to satisfy preferences. While IPL~\cite{DBLP:conf/nips/HejnaS23} and CPL~\cite{DBLP:journals/corr/abs-2310-13639} share a similar idea with our work in eliminating the reward learning stage, their loss functions are limited to trajectory pairs of equal length, significantly restricting their applicability.

\section{Preliminary}
In this section, we present multi-turn agent task formulation and briefly introduce Direct Preference Optimization (DPO) loss.
\subsection{Task Description}
The agent task can be formulated as a Markov decision process (MDP). A MDP is a 5-tuple ($\mathcal{S}, \mathcal{A}, \mathcal{T}, \mathcal{R}, \gamma)$, where $\mathcal{S}$ denotes the state space, $\mathcal{A}$ denotes action space, $\mathcal{T}$ denotes dynamic transition function $\mathcal{S}\times \mathcal{A}\rightarrow \mathcal{S}$, $\mathcal{R}$ denotes reward function $\mathcal{S}\times \mathcal{A} \rightarrow[0,1]$, and $\gamma\in[0,1)$ is the discount factor. The goal for the agent is to choose actions at each time step that maximize the expected future discounted reward $\mathbf{E}\left[ \sum_{t=0}^{T-1} \gamma^t r(s_t, a_t) \right]$, where $T$ is the trajectory length. 

In the language agent setting~\cite{DBLP:journals/corr/abs-2312-14878}, the state space and action space are both subsets of the language space. For the initial state $s_0\in \mathcal{S}$, it contains the task instruction and prompt. At each time step $t$, LLMs generate action $a_t$ according to the policy $\pi_\theta(a_t|s_t)$ with the parameter $\theta$. Then the environment will return dynamic feedback $o_t$ and transport the state into $s_{t+1}$. Note that the new state $s_{t+1}$ is just a simple combination of $s_t$, $a_t$, and $o_t$, and the trajectory $\tau=(s_0,a_0, s_1, a_1, \cdots, s_T,a_T)$. 

\subsection{Direct Preference Optimization}

The aim of the DPO loss is to directly optimize RL objectives with KL divergence constraints on the policy function:
% The classical RL objective optimizes a reward function with a policy gradient-based method like PPO~\cite{DBLP:journals/corr/SchulmanWDRK17} with KL divergence constraints of policy function:
\begin{multline}
   \max_{\pi_\theta} \mathbb{E}_{\tau}
      [\sum_{t=0}^{T-1}\gamma^t r(s_t,a_t)] \\
    - \beta \mathbb{D}_{KL}[\pi_\theta(a_t|s_t)||\pi_{ref}(a_t|s_t)],
    \label{equ:RL objective}
\end{multline}
where $\mathbb{E}$ is the expectation function, $\mathbb{D}_{KL}[\cdot||\cdot]$ denotes the KL divergence between two distributions, $\pi_{ref}$ denotes a reference policy, and the 
$\beta$ is a parameter controlling the deviation from the base reference policy $\pi_{ref}$. The DPO loss is tailored for the single-turn preference alignment setting, where the trajectory length ($T$) is limited to 1. \par Notably, the reward function is learned through the Bradley-Terry (BT) model~\cite{19ff28b9-64f9-3656-ba40-08326a05748e, DBLP:conf/nips/ChristianoLBMLA17}:
\begin{equation}
p(a_0^w \succ a_0^l|s_0)=\frac{\exp(r(s_0,a_0^w))}{\exp(r(s_0,a_0^w))+\exp(r(s_0,a_0^l))},
\label{equ:BT_model}
\end{equation}
which gives the probability that the “win” action $a_0^w$ is preferred to the “lose” action $a_0^l$ given the state $s_0$.

Then DPO leverages the established closed-form solution for the single-turn formulation of the reinforcement learning problem in Eq~\eqref{equ:RL objective} presented in~\cite{DBLP:conf/aaai/ZiebartMBD08, DBLP:phd/us/Ziebart18}:
\begin{equation}
    \pi^*(a|s) = \frac{1}{Z(s)}\pi_{ref}(a|s)e^{r(s,a)},
    \label{equ:optimal_policy}
\end{equation}
where $\pi^*$ denotes the optimal policy and $Z(s)$ denotes the partition function that normalizes it. We can easily rearrange Eq~\eqref{equ:optimal_policy} and substitute it into Eq~\eqref{equ:BT_model} to get the BT model over policy:
\begin{multline}
    p(a_0^w \succ a_0^l|s_0) = \\
    \sigma\left( \beta \log \frac{\pi_\theta(a_0^w|s_0)}{\pi_{ref}(a_0^w|s_0)} - \beta \log \frac{\pi_\theta(a_0^l|s_0)}{\pi_{ref}(a_0^l|s_0)}\right),
\end{multline}
where the partition function $Z(s)$ is canceled from the BT model and $\sigma$ is the sigmoid function. The DPO loss obtains the optimal policy $\pi_\theta^*$ by maximizing the likelihood:
\begin{equation}
    \mathcal{L}_{DPO} = - \mathbb{E}_{(s_0, a_0^w, a_0^l)\sim D} \log \left[p(a_0^w \succ a_0^l|s_0) \right],
\end{equation}
where $D$ represents the preference dataset. Nonetheless, such concise and elegant derivations are only suitable for single-turn preference optimization tasks. As shown in Eq~\eqref{equ:optimal_policy}, the partition function $Z(s)$ is dependent on the current state $s$, which precludes its cancellation under the policy constraint in the multi-turn setting.

\section{Method}
In this section, we will outline the definition and benefits of the state-action occupancy measure. Subsequently, we will introduce two adjustments to derive the DMPO loss. Finally, we will delve deeper into the analysis of the DMPO loss.
% our DMPO loss function, which directly optimizes the RL objective in the multi-turn setting by substituting the policy constraint with the state-action occupancy measure constraint. 

\subsection{State-Action Occupancy Measure}
The discounted state-action occupancy measure $d^\pi(s, a)$ of a policy $\pi$ describes the distribution of state-action pairs that an agent visits in the space with policy $\pi$:
\begin{equation}
\begin{split}
d^{\pi}(s,a)=\frac{1-\gamma}{1-\gamma^T}\sum_{t=0}^{T-1}\gamma^t\mathbb{P}(s_t=s, a_t=a|\pi),
% \\
% & =\frac{1-\gamma}{1-\gamma^T}\sum_{t=0}^{T-1}\gamma^t\mathbb{P}(s_t=s|\pi)\pi(a_t|s_t),
\end{split}
\end{equation}
where $\mathbb{P}(\cdot)$ denotes the probability and the coefficient $(1-\gamma)/(1-\gamma^T)$ is used to normalize the probability distribution. 

\begin{figure}[t]
  \centering
  \includegraphics[width=\linewidth]{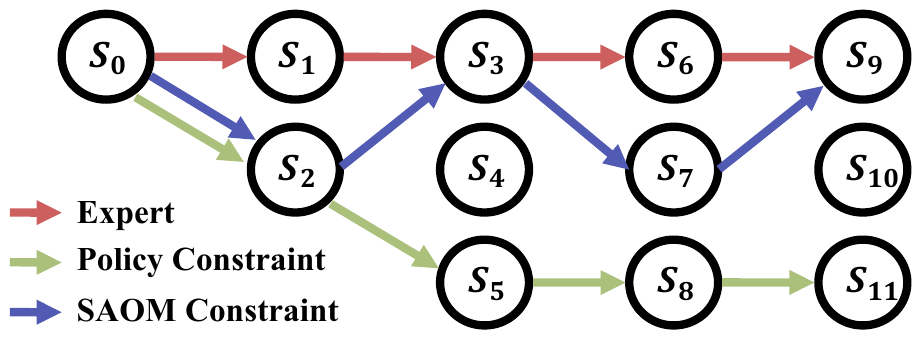}
  \caption{Illustration of expert trajectories and trajectories learned under the constraints of policy and state-action occupancy measure. }
  \label{fig:compounding_error}
\end{figure} 

First, we will provide an intuitive explanation of how the SAOM constraint can reduce the compounding error. In imitation learning, the conventional SFT learning objective aims to minimize the KL divergence between the expert policy and the current policy:
\begin{equation}
    \begin{split}
        &\min_{\pi_\theta} \mathbb{E}_{(s,a)\sim d^E}[\mathbb{D}_{KL}(\pi_{E}(a|s)||\pi_\theta(a|s)] \\
        =  &-\max_{\pi_\theta} \mathbb{E}_{(s,a)\sim d^{E}}[\log(\pi_\theta(a|s)],
    \end{split}
\end{equation}
where $\pi_{E}$ is the expert policy and $d^{E}$ is the SAOM with policy $\pi_{E}$. As shown in Figure~\ref{fig:compounding_error}, the trajectories learned under policy constraints are susceptible to significant compounding error. This vulnerability stems from the fact that expert datasets are unable to comprehensively cover all possible states. Consequently, the SFT loss leads the model to choose random actions in states that are not represented in the expert datasets. As a result, the model gradually deviates from the expert trajectories after the initial error, illustrating the phenomenon known as compounding error. \par 
To alleviate the compounding error, subsequent imitation learning research such as~\cite{DBLP:conf/icml/PieterN04, DBLP:conf/corl/GhasemipourZG19, DBLP:conf/nips/HoE16} employ the SAOM constraint:
\begin{equation}
    \min_{\pi_\theta} \mathbb{E}_{(s,a)\sim d^E}[\mathbb{D}_{(\cdot)}(d^{\pi_\theta}(a|s)||d^{\pi_{E}}(a|s))],
\end{equation}
where different approaches utilize different distribution distance measures $\mathbb{D}_{(\cdot)}$. The strength of SAOM constraint lies in its ability to steer action selection towards distributions that closely mimic expert state-action pairs, especially in unexplored states within the expert datasets. Illustrated in Figure~\ref{fig:compounding_error}, at state $s_2$, policy constraints lead the model to choose actions uniformly, whereas SAOM constraints aim to lead the model toward actions that bring the next state back onto the expert trajectory. This effectively mitigates compounding errors and enhances the cumulative reward.

\subsection{DMPO}\label{section:DMPO}
Inspired by imitation learning, we substitute the policy constraint with the SAOM constraint in Eq~\eqref{equ:RL objective} and get the following RL objective:
\begin{multline}
    \max_{\pi_\theta} \mathbb{E}_{(s,a)\sim d^{\pi_\theta}(s,a)}[r(s,a)] \\
    - \beta \mathbb{D}_{KL}[d^{\pi_\theta}(s,a)||d^{\pi_{ref}}(s,a)],
    \label{equ:s-a RL objective}
\end{multline}
where $\pi_{ref}$ represents the reference policy. Similar to~\cite{DBLP:conf/nips/RafailovSMMEF23}, it is straightforward to show that the optimal solution to the RL objective in Eq~\eqref{equ:s-a RL objective} takes the form:
\begin{equation}
    d^{\pi^{*}}(s,a) = \frac{1}{Z} d^{\pi_{ref}}(s,a)\exp(\frac{1}{\beta}r(s,a))
    \label{equ:s-a solution},
\end{equation}
where $\pi^*$ represents the optimal policy, $Z$ is the partition function that normalizes the probability. It's noteworthy that as $d^{\pi}(s,a)$ is a function of $(s,a)$ pairs, normalizing it results in the partition functions $Z$ being independent of the current state $s$. Consequently, $Z$ remains constant for all $(s,a)$ pairs, providing us with the opportunity to eliminate them. Easily, we can rearrange Eq~\eqref{equ:s-a solution} into:
\begin{equation}
    r(s,a) = \beta \log \frac{d^{\pi^*}(s,a)}{d^{\pi_{ref}}(s,a)} + \beta \log Z.
    \label{equ:s-a rearrange}
\end{equation}
Similar to Eq~\eqref{equ:BT_model}, we learn the reward function for multi-turn scenarios through the BT model:
\begin{multline}
    p(\tau^w\succ \tau^l|s_0) = \\
    \sigma\left(\sum_{t=0}^{T_w-1}\gamma^tr(s_t^w,a_t^w) - \sum_{t=0}^{T_l-1} \gamma^tr(s_t^l,a_t^l)\right),
    \label{equ:traj_BT}
\end{multline}
where $\tau^w$ and $\tau^l$ represent the "win" and "lose" trajectories respectively, $T_w$ and $T_l$ represent the "win" and "loss" trajectory length respectively. However, since $T^w\neq T^l$, the partition function $Z$ cannot be canceled directly in Eq~\eqref{equ:traj_BT}. \par 
To overcome this obstacle, we introduce the length normalization technique to Eq~\eqref{equ:traj_BT}:
% \begin{multline}
%     p(\tau^w\succ \tau^l|s_0) = \\
%     \sigma\left((1-\gamma)(\sum_{t=0}^{T_w-1}\frac{\gamma^tr(s_t^w,a_t^w)}{1-\gamma^{T_w}} - \sum_{t=0}^{T_l-1} \frac{\gamma^tr(s_t^l,a_t^l)}{1-\gamma^{T_l}})\right),
%     \label{equ:traj_BT}
% \end{multline}
\begin{multline}
    p(\tau^w\succ \tau^l|s_0) = 
    \sigma\left(\frac{1-\gamma}{1-\gamma^{T_w}}\sum_{t=0}^{T_w-1}\gamma^tr(s_t^w,a_t^w) \right.\\  \left. - \frac{1-\gamma}{1-\gamma^{T_l}}\sum_{t=0}^{T_l-1} \gamma^tr(s_t^l,a_t^l)\right).
    \label{equ:traj_BT_norm}
\end{multline}
In this way, we can eliminate the partition function $Z$ in Eq~\eqref{equ:traj_BT_norm} by substituting the reward function $r(s,a)$ in Eq~\eqref{equ:s-a rearrange}. Then we  maximize the likelihood and obtain:
% -\mathbb{E}_{(s_0,\tau^w,\tau^l)\sim D}&\log \left[ p(\tau^w\succ \tau^l)\right] \\
\begin{multline}
    L_{DMPO}=-\mathbb{E}_{(s_0,\tau^w,\tau^l)\sim D}\log \sigma \\ 
    \left[ \frac{1-\gamma}{1-\gamma^{T_w}}\sum_{t=0}^{T_w-1} \beta \gamma^t \log \frac{d^{\pi_\theta}(s_t^w,a_t^w)}{d^{\pi_{ref}}(s_t^w,a_t^w)} - \right.\\ 
    \left.  \frac{1-\gamma}{1-\gamma^{T_l}}\sum_{t=0}^{T_l-1} 
     \beta \gamma^t \log\frac{d^{\pi_\theta}(s_t^l,a_t^l)}{d^{\pi_{ref}}(s_t^l,a_t^l)}
     \right],
     \label{equ:DMPO_1}
\end{multline}
where the $d^\pi(s_t,a_t)$ can be further written as:
\begin{multline}
    d^{\pi}(s=s_t^w,a=a_t^w)= \gamma^t\cdot P(s_0) \cdot \\
    \prod_{k=0}^{t-1}\pi(a_k^w|s_k^w)P(s_{k+1}^w|s_k^w,a_k^w),
    \label{equ:DMPO_2}
\end{multline}
where $P(s_0)$ represents the probability of the initial state $s_0$ and $P(s_{k+1}|s_{k},a_k)$ denotes the transition functions. In general, obtaining the SAOM $d^\pi(s_t,a_t)$ is challenging because we do not know the transition function $P(s_{k+1}|s_{k},a_k)$ in dynamic environments. However, in Eq~\eqref{equ:DMPO_1} we simply calculate the ratio between the current SAOM $d^{\pi_\theta}(s_t,a_t)$ and the reference SAOM $d^{\pi_{ref}}(s_t,a_t)$. It is important to note that the transition function remains consistent for both, allowing for cancellation. By substituting the Eq~\eqref{equ:DMPO_2} into Eq~\eqref{equ:DMPO_1}, we can obtain the DMPO loss function:
% \begin{equation}
%     \begin{split}
%     & \frac{d^{\pi_\theta}(s_t^w,a_t^w|s_0)}{d^{\pi_{ref}}(s_t^w,a_t^w|s_0)} \\
%     = & \frac{\gamma^tP(s_0)\sum_{k=0}^{t-1}\pi_\theta(a_k^w|s_k^w)P(s_{k+1}^w|s_k^w,a_k^w)}{\gamma^tP(s_0)\sum_{k=0}^{t-1}\pi_{ref}(a_k^w|s_k^w)P(s_{k+1}^w|s_k^w,a_k^w)} \\
%     =& \frac{\sum_{k=0}^{t-1}\pi_\theta(a_k^w|s_k^w)}{\sum_{k=0}^{t-1}\pi_{ref}(a_k^w|s_k^w)},
%     \end{split}
%     \label{equ:DMPO_2}
% \end{equation}
% \begin{multline}
%     d^{\pi_\theta}(s_t^w,a_t^w)= \gamma^t\cdot P(s_0) \cdot \\
%     \sum_{k=0}^{t-1}\pi_\theta(a_k^w|s_k^w)P(s_{k+1}^w|s_k^w,a_k^w) 
%     \label{equ:DMPO_2}
% \end{multline}
% where $P(s_0)$ represents the probability of initial state $s_0$ and $P(s_{k+1}|s_{k},a_k)$ denotes the transition functions. By substituting the Eq~\eqref{equ:DMPO_2} into Eq~\eqref{equ:DMPO_1}, we can get the DMPO loss function:
\begin{multline}
    L_{DMPO}=-\mathbb{E}_{(s_0,\tau^w,\tau^l)\sim D}\log \sigma\\ \left[ \sum_{t=0}^{T_w-1} \beta \phi(t,T_w) \log \frac{\pi_\theta(a_t^w|s_t^w)}{\pi_{ref}(a_t^w|s_t^w)} \right.\\
    \left.  - \sum_{t=0}^{T_l-1} \beta \phi(t,T_l) \log \frac{\pi_\theta(a_t^l|s_t^l)}{\pi_{ref}(a_t^l|s_t^l)}
     \right],
     \label{equ:DMPO_3}
\end{multline}
where the discount function $\phi(t,T)=\gamma^t\cdot(1-\gamma^{T-t})/(1-\gamma^{T})$. It's noteworthy that DMPO reweights state-action pairs at various steps using a discount function $\phi(t,T)$.

\subsection{In-Depth Analysis}
In this subsection, we will explore the advantages of the DMPO loss and present some lemmas and observations.

% \paragraph{Gradient Analysis}
% To gain a mechanistic understanding of DMPO, it is useful to analyze the gradient of the loss function $L_{DMPO}$. 
\begin{corollary} 
The DMPO loss assigns higher weights to state-action pairs at early steps, where the weight is related to discount factor $\gamma$.
\end{corollary}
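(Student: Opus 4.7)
The plan is to argue directly from the closed form $\phi(t,T) = (1-\gamma^{T-t})/(1-\gamma^{T})$ appearing in Eq~\eqref{equ:DMPO_3}. I want to establish two things: for every fixed trajectory length $T$, the coefficient $\phi(t,T)$ is strictly decreasing in $t$, so that earlier state-action pairs receive the largest multiplicative weights; and the magnitude and the rate of decay of these weights are both governed entirely by the discount factor $\gamma$.

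First I would fix $T \geq 2$ and treat $\phi(\cdot,T)$ as a function of $t$. The denominator $1-\gamma^{T}$ is a positive constant independent of $t$, so the monotonicity of $\phi(t,T)$ in $t$ reduces to that of the numerator $1-\gamma^{T-t}$. Since $\gamma \in [0,1)$ by the MDP specification in the preliminary, the map $k \mapsto \gamma^{k}$ is strictly decreasing on the non-negative integers; hence as $t$ grows, $T-t$ shrinks, $\gamma^{T-t}$ grows, and $1-\gamma^{T-t}$ strictly decreases. To make the role of $\gamma$ quantitative I would then compute the one-step gap
\begin{equation*}
\phi(t,T) - \phi(t+1,T) = \frac{\gamma^{T-t-1}(1-\gamma)}{1-\gamma^{T}},
\end{equation*}
which is strictly positive and decays geometrically in $T-t-1$ at rate $\gamma$, so that the weight difference between adjacent steps is large near the end of the trajectory and small near the beginning.

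Finally I would record the boundary values and two limiting regimes to sharpen the qualitative picture: $\phi(0,T)=1$ and $\phi(T-1,T) = (1-\gamma)/(1-\gamma^{T}) < 1$, with all intermediate values sliding monotonically between them; a single application of L'Hospital's rule gives $\lim_{\gamma \to 1^{-}} \phi(t,T) = (T-t)/T$, a mild linear taper, while as $\gamma \to 0^{+}$ the weight concentrates sharply on the earliest steps. There is no real obstacle in this argument; the only edge case is $T=1$, in which the sum in Eq~\eqref{equ:DMPO_3} degenerates to a single term and the claim about reweighting is vacuous.
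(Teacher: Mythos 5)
Your proof is correct on the main claim and is actually more complete than the paper's, though it reaches the conclusion by a slightly different route. The paper identifies the weights by differentiating the loss: it computes $\nabla_\theta L_{DMPO}$ (Eq~\eqref{equ:gradient}), observes that each per-step gradient term $\nabla_\theta \log \pi_\theta(a_t|s_t)$ carries the coefficient $\beta\phi(t,T)$, and then simply \emph{asserts} that $\phi(t,T)=(1-\gamma^{T-t})/(1-\gamma^{T})$ decreases in $t$ and depends on $\gamma$. You instead read the weights directly off the loss in Eq~\eqref{equ:DMPO_3} and actually supply the monotonicity argument the paper omits (positive $t$-independent denominator, numerator $1-\gamma^{T-t}$ strictly decreasing in $t$), together with the one-step gap $\gamma^{T-t-1}(1-\gamma)/(1-\gamma^{T})$ and the boundary values $\phi(0,T)=1$, $\phi(T-1,T)=(1-\gamma)/(1-\gamma^{T})$. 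The gradient route buys an operational meaning for ``weight'' --- it is the scale of the training signal each state-action pair contributes --- while your route is more elementary and more rigorous on the monotonicity itself; the two are complementary rather than in conflict. One caveat on a non-essential remark: your claim that the weights ``concentrate sharply on the earliest steps'' as $\gamma\to 0^{+}$ does not hold for $\phi$ as literally defined, since $\phi(t,T)\to 1$ for every $t\le T-1$, i.e., the weights flatten to uniform rather than concentrate (concentration would require the additional factor $\gamma^{t}$ that appears upstream in Eq~\eqref{equ:DMPO_1} but is absent from the stated $\phi$; the paper's own Corollary 1.2 appears to rely implicitly on that factor). Since that limit is only a sharpening aside, the proof of the corollary itself stands. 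Also note that at $\gamma=0$ exactly the map $k\mapsto\gamma^{k}$ is not strictly decreasing, so strict monotonicity of $\phi(\cdot,T)$ should be claimed only for $\gamma\in(0,1)$.
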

\begin{proof}
To prove the lemma, we analyze the gradient of the loss function $L_{DMPO}$ according to $\theta$:
\begin{multline}
    \nabla_\theta L_{DMPO} = -\mathbb{E}_{(s_0,\tau^w,\tau^l)\sim D} \sigma[\Phi(\tau^l) - \Phi(\tau^w)] \\
    \left[   \sum_{t=0}^{T_w-1} \beta \phi(t,T_w) \nabla_\theta \log \pi_\theta(a_t^w|s_t^w)         \right.\\
    \left.  - \sum_{t=0}^{T_l-1} \beta \phi(t,T_l) \nabla_\theta \log \pi_\theta(a_t^l|s_t^l)         \right],
    \label{equ:gradient}
\end{multline}
where function $\Phi(\tau)=\sum_{t=0}^{T-1}\beta \phi(t,T)\log \frac{\pi_\theta(a_t|s_t)}{\pi_{ref}(a_t|s_t)}$ and $\phi(t,T)=\gamma^t\cdot(1-\gamma^{T-t})/(1-\gamma^{T})$. The discount function $\phi(t,T)$ decreases as $t$ increases and is related to the discounted factor $\gamma$. This completes the proof.
\end{proof}
\begin{corollary}
The DMPO loss degenerates into the single-turn DPO loss when the discount factor $\gamma$ approaches zero.
\end{corollary}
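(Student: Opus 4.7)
The plan is to trace through the DMPO derivation and take the $\gamma \to 0$ limit at the level where the factor $\gamma^t$ is still explicit, rather than working directly from the final form Eq~\eqref{equ:DMPO_3}, in which powers of $\gamma$ have been regrouped into the discount function $\phi(t,T)$.

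First I would return to the intermediate expression Eq~\eqref{equ:DMPO_1}, whose summand has the explicit shape $\frac{1-\gamma}{1-\gamma^{T}}\,\beta\,\gamma^{t}\,\frac{d^{\pi_\theta}(s_t,a_t)}{d^{\pi_{ref}}(s_t,a_t)}$. Taking $\gamma \to 0^{+}$, the length-normalization prefactor $(1-\gamma)/(1-\gamma^{T})$ converges to $1$ for any $T \geq 1$, while $\gamma^{t}$ vanishes for every $t \geq 1$ and equals one at $t=0$. Hence the inner sum in each of the two branches collapses to the single $t=0$ contribution $\beta\,d^{\pi_\theta}(s_0,a_0)/d^{\pi_{ref}}(s_0,a_0)$. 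I would then simplify this surviving ratio using Eq~\eqref{equ:DMPO_2}: at $t=0$ the occupancy $d^{\pi}(s_0,a_0)$ factors as $P(s_0)\,\pi(a_0|s_0)$ up to a policy-independent normalization, and the non-policy factors are common to $\pi_\theta$ and $\pi_{ref}$, so they cancel in the ratio and leave $\pi_\theta(a_0|s_0)/\pi_{ref}(a_0|s_0)$. Substituting back recovers exactly the sigmoid argument in the single-turn DPO loss with $(s_0,a_0^{w},a_0^{l})$ playing their usual roles, completing the reduction.

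The main obstacle I anticipate is notational rather than conceptual: in the packaged form Eq~\eqref{equ:DMPO_3} the discount function $\phi(t,T)=(1-\gamma^{T-t})/(1-\gamma^{T})$ satisfies $\phi(t,T)\to 1$ for every $t<T$ as $\gamma\to 0$, so $\phi$ alone does not suppress the contributions at $t\geq 1$. The cleanest remedy is to perform the limit before absorbing $\gamma^{t}$ into $\phi$, as above. As an independent cross-check, I would also carry out the argument at the level of the Bradley-Terry model in Eq~\eqref{equ:traj_BT_norm}: as $\gamma \to 0$ the length-normalized return $\frac{1-\gamma}{1-\gamma^{T}}\sum_{t}\gamma^{t}r(s_t,a_t)$ collapses to the immediate reward $r(s_0,a_0)$, so the preference probability reduces to $\sigma\!\left(r(s_0,a_0^{w})-r(s_0,a_0^{l})\right)$, which is precisely the single-turn BT model Eq~\eqref{equ:BT_model}; substituting the closed-form optimal policy Eq~\eqref{equ:optimal_policy} then yields the standard DPO loss, confirming the corollary from the other end of the derivation.
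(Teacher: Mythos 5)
Your proof is correct and, in spirit, follows the same route as the paper: send $\gamma\to 0$ and observe that only the $t=0$ term survives, reducing the sigmoid argument to $\beta\,\pi_\theta(a_0^w|s_0)/\pi_{ref}(a_0^w|s_0)-\beta\,\pi_\theta(a_0^l|s_0)/\pi_{ref}(a_0^l|s_0)$. However, your execution differs in a way that actually matters. The paper's one-line proof asserts that at $\gamma=0$ the function $\phi(t,T)$ equals $1$ at $t=0$ and $0$ otherwise; as you correctly point out, with the stated definition $\phi(t,T)=(1-\gamma^{T-t})/(1-\gamma^{T})$ one has $\phi(t,T)\to 1$ for \emph{every} $t<T$, so the paper's claim is inconsistent with its own definition of $\phi$ (it holds only if one reads the per-step weight as $\gamma^{t}\phi(t,T)=(\gamma^{t}-\gamma^{T})/(1-\gamma^{T})$, i.e., if the explicit $\gamma^{t}$ factor from Eq~\eqref{equ:DMPO_1} has been silently absorbed into, or dropped from, $\phi$ when passing to Eq~\eqref{equ:DMPO_3}). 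Your remedy of taking the limit at the level of Eq~\eqref{equ:DMPO_1}, where $\gamma^{t}$ is still explicit and the length-normalization prefactor manifestly tends to $1$, is the correct way to make the degeneration rigorous, and your observation that the $\gamma^{t}$ inside $d^{\pi}$ from Eq~\eqref{equ:DMPO_2} cancels in the ratio (so the surviving $t=0$ ratio is exactly $\pi_\theta(a_0|s_0)/\pi_{ref}(a_0|s_0)$) closes the remaining gap. The independent cross-check at the level of the Bradley--Terry model in Eq~\eqref{equ:traj_BT_norm}, collapsing the normalized return to $r(s_0,a_0)$ and recovering Eq~\eqref{equ:BT_model}, is a nice confirmation the paper does not provide. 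In short: same conclusion, but your argument is the one that actually compiles; the paper's stated proof needs the $\gamma^{t}$ factor restored to be valid.
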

\begin{proof}
When $\gamma$ equals 0, the function $\phi(t,T)$ is 1 at $t=0$, and 0 otherwise, which is equivalent to a single-turn DPO loss.
\end{proof}
Based on the analysis above, we have the following observations:\par
\begin{observation}
Similar to the DPO loss, the DMPO loss increases the likelihood of the preferred trajectories $\tau_w$ and decreases the likelihood of the dispreferred trajectories $\tau_l$.
\end{observation}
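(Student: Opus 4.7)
The plan is to read the claim off directly from the gradient formula already obtained in the proof of the first corollary, since its sign structure alone encodes the preferred/dispreferred behavior. First I would recall
\[
\nabla_\theta L_{DMPO} \;=\; -\,\mathbb{E}\Bigl[\sigma\bigl(\Phi(\tau^l)-\Phi(\tau^w)\bigr)\,\bigl(G_w-G_l\bigr)\Bigr],
\]
where $G_w=\sum_{t=0}^{T_w-1}\beta\phi(t,T_w)\nabla_\theta\log\pi_\theta(a_t^w|s_t^w)$ and $G_l=\sum_{t=0}^{T_l-1}\beta\phi(t,T_l)\nabla_\theta\log\pi_\theta(a_t^l|s_t^l)$. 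The sigmoid prefactor is strictly positive, and each weight $\beta\phi(t,T)=\beta(1-\gamma^{T-t})/(1-\gamma^T)$ is strictly positive for $\beta>0$, $\gamma\in[0,1)$, and $0\le t<T$; no coefficient in either sum can flip sign.

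Next I would translate this into a statement about likelihoods by considering a one-step update $\theta\leftarrow\theta-\eta\nabla_\theta L_{DMPO}$, which moves $\theta$ along a strictly positive multiple of $G_w-G_l$. By a first-order Taylor expansion, this update is an ascent direction for the weighted preferred log-likelihood $\sum_{t}\beta\phi(t,T_w)\log\pi_\theta(a_t^w|s_t^w)$ and a descent direction for the weighted dispreferred log-likelihood $\sum_{t}\beta\phi(t,T_l)\log\pi_\theta(a_t^l|s_t^l)$. Since the environment transition factors do not depend on $\theta$, the trajectory log-likelihood under $\pi_\theta$ is just $\sum_t\log\pi_\theta(a_t|s_t)$, and because every per-step weight is strictly positive, the sign of the induced change on each $\log\pi_\theta(a_t^w|s_t^w)$ is positive while the sign on each $\log\pi_\theta(a_t^l|s_t^l)$ is negative. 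This is precisely the DPO-style contrastive behavior asserted in the observation.

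The main point requiring care, rather than a serious obstacle, is to be precise about which likelihood DMPO literally ascends: strictly speaking, the loss controls the $\phi(t,T)$-weighted sum of log-probabilities rather than the unweighted trajectory log-likelihood. I would flag this explicitly in the write-up and note that the weighting only redistributes, never flips, the per-step sign, so the conclusion is uniform in $\gamma\in[0,1)$; in the $\gamma\to 0$ limit it collapses to the standard single-turn DPO statement recovered in the second corollary, which provides a sanity check on the analysis.
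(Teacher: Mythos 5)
Your proposal is correct and matches the paper's own (implicit) justification: the paper states this observation as an immediate consequence of the gradient formula in Eq.~\eqref{equ:gradient} derived for the first corollary, reading off that the positive weights $\beta\phi(t,T)$ and the positive sigmoid prefactor make the update an ascent direction on the preferred log-probabilities and a descent direction on the dispreferred ones. Your extra care in distinguishing the $\phi$-weighted log-likelihood from the unweighted trajectory log-likelihood goes slightly beyond what the paper makes explicit, but the route is the same.
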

% Similar to the DPO loss, the gradient of the DMPO loss also increases the likelihood of the preferred trajectories $\tau_w$ and decreases the likelihood of the dispreferred trajectories $\tau_l$.\par
\begin{observation}
If the reward $\Phi(\tau_l)$ of dispreferred trajectory is estimated higher by the policy $\pi_\theta$, the weight $\sigma[\Phi(\tau^l) - \Phi(\tau^w)]$ will be larger.
\end{observation}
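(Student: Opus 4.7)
The plan is to start from the gradient expression derived in the preceding corollary (Eq~\eqref{equ:gradient}), where the scalar factor $\sigma[\Phi(\tau^l)-\Phi(\tau^w)]$ already appears as the coefficient multiplying the trajectory-level gradient contributions. Since this factor is exactly the quantity whose monotonic behavior the observation concerns, the claim reduces to asking how $\sigma[\Phi(\tau^l)-\Phi(\tau^w)]$ responds to an increase in its first argument $\Phi(\tau^l)$.

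The key step is to invoke the strict monotonicity of the sigmoid $\sigma(x)=1/(1+e^{-x})$ on $\mathbb{R}$. First I would hold $\Phi(\tau^w)$ fixed and view the weight as a function of $\Phi(\tau^l)$ alone. Because increasing $\Phi(\tau^l)$ strictly increases the scalar argument $\Phi(\tau^l)-\Phi(\tau^w)$, and $\sigma$ is strictly increasing, the value $\sigma[\Phi(\tau^l)-\Phi(\tau^w)]$ strictly increases as well. This, together with the fact that $\Phi(\tau)=\sum_{t=0}^{T-1}\beta\,\phi(t,T)\,\pi_\theta(a_t|s_t)/\pi_{ref}(a_t|s_t)$ is a well-defined real number (established in the proof of the first corollary), is the entire formal content of the observation.

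I do not anticipate any substantive obstacle: there is no genuine calculation to grind through, and no auxiliary lemma is needed beyond monotonicity of $\sigma$. The one conceptual remark worth appending after the formal argument is its interpretation: when the current policy $\pi_\theta$ over-rewards the dispreferred trajectory $\tau^l$ relative to $\tau^w$, the coefficient in Eq~\eqref{equ:gradient} is amplified, so the update step in DMPO pushes $\pi_\theta$ more forcefully to correct the mis-ranking. This parallels the adaptive hard-example reweighting of the DPO gradient and thus situates the observation as the multi-turn analogue of a well-known property of DPO.
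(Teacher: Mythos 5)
Your proposal is correct and matches the paper's (implicit) justification: the paper simply states this observation as a consequence of the gradient expression in Eq.~\eqref{equ:gradient}, and your argument makes explicit the only ingredient needed, namely the strict monotonicity of $\sigma$ applied to the coefficient $\sigma[\Phi(\tau^l)-\Phi(\tau^w)]$ with $\Phi(\tau^w)$ held fixed. Your closing interpretive remark about adaptive reweighting of mis-ranked pairs is consistent with the paper's framing and adds nothing contradictory.
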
 
% At trajectory level, if the reward $\Phi(\tau_l)$ of dispreferred trajectory is estimated higher by the current policy $\pi_\theta$, the weight $\sigma[\Phi(\tau^l) - \Phi(\tau^w)]$ will be larger. \par
% \noindent $\bullet$ At the state-action level, the samples are weighted by a decreasing function $\phi(t, T)$ based on step $t$. This assigns higher weights to early state-action pairs, indicating that the DMPO loss places more emphasis on the accuracy of early-stage actions. In this way, the DMPO loss prevents the accumulation of errors from the beginning and effectively mitigates compounding errors.

\paragraph{Length Normalization Explanation} In SimPO~\cite{meng2024simpo}, the effectiveness of the length normalization technique was empirically demonstrated. However, a theoretical explanation was not provided. Our derivation shows that it assists in eliminating the partition function. Without length normalization in Eq~\eqref{equ:traj_BT_norm}, a length-dependent bias term arises in the BT model, degrading model performance as the disparity in trajectory lengths between preferred and dispreferred samples increases.

\paragraph{Further Discussion}
As discussed in Section~\ref{section:DMPO}, the optimal solution to the RL objective in Eq~\eqref{equ:s-a RL objective} takes the form shown in Eq~\eqref{equ:s-a solution}. However, it is contended that achieving the optimal solution may not always be feasible when dealing with an arbitrary reward function $r(s,a)$ within the context of a language agent setting. This limitation arises due to the definition of the new state $s_{t+1}$ as a composite of $s_t$, $a_t$, and $o_t$, which introduces an inherent constraint on the transition function between states. In general, in multi-turn dynamic environments, no loss function can rigorously optimize the RL objective, and the DMPO loss serves as a good approximation. In many cases, the DMPO loss can precisely optimize the RL objective in Eq~\eqref{equ:s-a RL objective}.
% 梯度分析
% Discussion
% explanation

\section{Experiments}

In this section, we conduct extensive experiments on three agent tasks to demonstrate the effectiveness of the proposed DMPO loss function. Our experiments aim to address the following questions: \par
\noindent $\bullet$ \textbf{RQ1:} Can the DMPO loss function exhibit robustness to noisy training trajectories data and mitigate compounding errors? \par
\noindent $\bullet$ \textbf{RQ2:} How does the DMPO loss function perform compared to other baselines? \par
\noindent $\bullet$ \textbf{RQ3:} What is the impact of the discount factor $\gamma$ and the trajectory length on the DMPO loss?

\subsection{Experiment Setting}

\paragraph{Datasets} 

\begin{table}[t]
\centering
\tabcolsep=3pt
\resizebox{1\linewidth}{!}{
\begin{tabular}{lccc}
\toprule
\textbf{Dataset} & WebShop & ScienceWorld & ALFWorld  \\
\midrule
Train        &  1938    &   1483    & 3321 \\
Test-Seen    & 200 & 194 & 140 \\
Test-UnSeen  & - & 241 & 134 \\
\bottomrule
\end{tabular}
}
\caption{
Statistics of three agent datasets. ``Train'', ``Test-Seen'', and ``Test-Unseen'' refer to the number of tasks in each set respectively.
}
\vspace{-5pt}
\label{tab:dataset}
\end{table}

Following prior work~\cite{DBLP:journals/corr/abs-2403-02502},  we conduct experiments on three representative agent datasets, including WebShop~\cite{DBLP:conf/nips/Yao0YN22}, ScienceWorld~\cite{DBLP:conf/emnlp/WangJCA22}, and
ALFWorld~\cite{DBLP:conf/iclr/ShridharYCBTH21}. \par
\noindent $\bullet$ WebShop is a simulated shopping website environment where agents find and purchase products according to specifications provided in a natural language instruction. The final reward $r\in[0,1]$ is calculated based on how closely the purchased products match the specified criteria.\par
\noindent $\bullet$ ScienceWorld is an interactive text environment that tests agents' scientific reasoning abilities in elementary science experiments with 10 task types. The final reward $r\in [0,1]$ is computed based on the number of subgoals the agent successfully accomplishes within each task.\par
\noindent $\bullet$ ALFWorld is a simulated text-based environment that enables agents to complete embodied household tasks from the ALFRED benchmark~\cite{DBLP:conf/cvpr/ShridharTGBHMZF20}. The final binary rewards signify the completion status of the task.\par

All three environments can be formally described as MDP and conducted by language agents. The statistical details of our datasets are outlined in Table~\ref{tab:dataset}. Following~\cite{DBLP:journals/corr/abs-2403-02502}, in addition to the in-distribution ``seen'' test sets, both ScienceWorld and ALFWorld include ``unseen'' test sets that include out-of-distribution tasks. These additional test sets enable us to evaluate the generalization capabilities of different agents.

\paragraph{Training Setting}
We assess the robustness and effectiveness of the DMPO loss function by employing two distinct training scenarios: Noisy setting and Clean setting. Following~\cite{DBLP:journals/corr/abs-2403-02502}, we adopt the experts' trajectories as the "win" trajectories to form preference trajectory data in both noisy setting and clean setting. Initially, we utilize the LLMs, which have been fine-tuned with expert trajectories, to generate new trajectories on the training set. We observe that the LLMs have a tendency to generate trajectories with repeated actions or meaningless words. In the noisy setting, these noisy trajectories are used as "lose" trajectories for preference data. Conversely, in the Clean setting, we eliminate the noisy trajectories and employ the remaining ones as "lose" trajectories for preference data.

\begin{table*}[t]
\centering
\tabcolsep=15pt
\resizebox{1\linewidth}{!}{
\begin{tabular}{@{}lccccc@{}}
\toprule
\multirow{2}{*}{\textbf{Method}} & \multirow{2}{*}{\textbf{WebShop}} & \multicolumn{2}{c}{\textbf{ScienceWorld}} & \multicolumn{2}{c}{\textbf{ALFWorld}} \\
\cmidrule(l){3-4} \cmidrule(l){5-6}
& & Seen & Unseen & Seen & Unseen \\
\midrule
% GPT-4* & 63.2 & 64.8 & 64.4 & 42.9 & 38.1 \\
% GPT-3.5-Turbo* & 62.4 & 16.5 & 13.0 & 7.9 & 10.5 \\
% \midrule
% Llama-2-7B-Chat + SFT & 62.4 & 58.4 & 56.0 & 65.0 & 73.0 \\
%0.643 ± 0.008	0.708 ± 0.015	0.651 ± 0.004	0.742 ± 0.012	0.888 ± 0.000

Llama-2-7B-Chat + DPO & 0.641 \scriptsize{${\pm}$} 0.002 & 0.601 \scriptsize{${\pm}$} 0.004 & 0.576 \scriptsize{${\pm}$} 0.001 & \textbf{0.474\scriptsize{${\pm}$}0.004} & 0.540 \scriptsize{${\pm}$} 0.005 \\
Llama-2-7B-Chat + DMPO & \textbf{0.666 \scriptsize{${\pm}$} 0.007} & \textbf{0.619 \scriptsize{${\pm}$} 0.003} & \textbf{0.584 \scriptsize{${\pm}$} 0.005} & 0.433 \scriptsize{${\pm}$} 0.004 & \textbf{0.550 \scriptsize{${\pm}$} 0.004}\\
% \midrule 
% Llama-3-8B-Chat + DPO &  & & & &  \\
% Llama-3-8B-Chat + DMPO & & & & &   \\
\midrule
Mistral-7B-Instructv0.2 + DPO & 0.637 \scriptsize{${\pm}$} 0.007 &  0.700 \scriptsize{${\pm}$} 0.003 & 0.629 \scriptsize{${\pm}$} 0.008 & \textbf{0.745 \scriptsize{${\pm}$} 0.004} & 0.883 \scriptsize{${\pm}$} 0.004  \\
Mistral-7B-Instructv0.2 + DMPO & \textbf{0.643 \scriptsize{${\pm}$} 0.008} &  \textbf{0.708 \scriptsize{${\pm}$} 0.015}  & \textbf{0.651 \scriptsize{${\pm}$} 0.004} & 0.742 \scriptsize{${\pm}$} 0.012 & \textbf{0.888 \scriptsize{${\pm}$} 0.000} \\
\bottomrule
\end{tabular}
}
\caption{
Noisy setting: The average reward of different base LLMs on three agent datasets. "Seen" denotes in-distribution test sets, while "Unseen" denotes out-of-distribution test sets. The results are averaged with three distinct random seeds. The best results for each base model are highlighted in bold.}
\label{tab:noisy}
\vspace{-5pt}
\end{table*}

\begin{table}[t]
\centering
\tabcolsep=15pt
\resizebox{1\linewidth}{!}{
\begin{tabular}{@{}lccc@{}}
\toprule
\multirow{2}{*}{\textbf{Method}} & \multirow{2}{*}{\textbf{WebShop}} & \multicolumn{2}{c}{\textbf{ScienceWorld}}  \\
\cmidrule(l){3-4} 
& & Seen & Unseen  \\
\midrule
GPT-4* & 0.632 & 0.648 & 0.644 \\
GPT-3.5-Turbo* & 0.624 & 0.165 & 0.130\\
\midrule
Base* & 0.179 & 0.380 & 0.310  \\
Best-of-N* & 0.638 & 0.702 & 0.576 \\
RFT* & 0.636 & 0.716 & 0.543  \\
PPO* & 0.642 & 0.594 & 0.517  \\
\midrule
%0.698 ± 0.003	0.685 ± 0.004	0.611 ± 0.003
%0.701 ± 0.003	0.722 ± 0.005	0.617 ± 0.002

SFT & 0.631 & 0.568 & 0.560  \\
ETO & 0.698\scriptsize{${\pm}$}0.003 & 0.685\scriptsize{${\pm}$}0.004 & 0.611\scriptsize{${\pm}$}0.003 \\
\midrule
DMPO & \textbf{0.701\scriptsize{${\pm}$}0.003} & \textbf{0.724\scriptsize{${\pm}$}0.005} & \textbf{0.617\scriptsize{${\pm}$}0.002}  \\
\bottomrule
\end{tabular}
}
\caption{
Clean setting: The average reward of different methods on two agent datasets based on Llama-2-7B-Chat. The best results of tuning methods are highlighted in bold. *Results are taken from~\cite{DBLP:journals/corr/abs-2403-02502}.
}
\vspace{-10pt}
\label{tab:clean}
\end{table}

\paragraph{Parameter Settings}
In this work, we utilize two different base models Llama-2-7B-Chat~\cite{touvron2023llama} and Mistral-7B-Instruct-v0.2~\cite{DBLP:journals/corr/abs-2310-06825} to build language agents. Following~\cite{DBLP:journals/corr/abs-2403-02502}, we utilize the AdamW optimizer. When supervised fine-tuning the base models to get the reference model, we set the batch size to 64. The learning rate is selected from \{1e-5, 2e-5, 3e-5\} with 3\% warm up and a cosine scheduler. When refining the agents with DMPO loss function, we set the batch size to 32 and tune the hyperparameters $\beta$ and $\gamma$ within the ranges of \{0.1, 0.2, 0.3, 0.4, 0.5, 0.6, 0.7, 0.8, 0.9 \} and  \{0.1, 0.2, 0.3, 0.4, 0.5, 0.6, 0.7, 0.8, 0.9, 0.99\} respectively. We conduct all experiments on 8 NVIDIA A100 GPUs.

\paragraph{Evaluation Setting} Following~\cite{DBLP:journals/corr/abs-2403-02502}, we evaluate all methods using the ReAct-style interaction format~\cite{DBLP:conf/iclr/YaoZYDSN023}, which generates both reasoning traces and actions in an interleaved manner. For each task, we add 1-shot examples for each task, which can be found in~\cite{DBLP:journals/corr/abs-2403-02502}. Unless otherwise stated, we set the decoding generate temperature as 0.0.

\subsection{Noisy Setting Results (RQ1)}

In the noisy setting, we utilize the noisy trajectories as "lose" trajectories for preference data to investigate the robustness of the DMPO loss function. As shown in Table~\ref{tab:noisy}, we evaluate the DMPO loss function with two different base models on two representative agent tasks and observe that: \par
\noindent $\bullet$ In all Unseen test sets and most Seen test sets for both base models, the DMPO loss function outperforms the DPO loss function. This superiority stems from DMPO assigning greater importance to initial state-action pairs, prioritizing high-quality expert actions from the early stages, and reducing the influence of noisy "lose" actions in later stages. This mitigates the influence of noise, endowing the model with enhanced generalization capabilities. Meanwhile, the DPO loss is not appropriate for multi-turn settings and cannot cancel out the partition function in the BT model, thereby resulting in its inferior performance. \par
\noindent $\bullet$ The performance of Mistral-7B-Instruct-v0.2 is significantly better than that of Llama-2-7B-Chat on Scienceworld and AlfWorld. This observation suggests a positive correlation between the effectiveness of the base model and its performance enhancement after fine-tuning for agent tasks using the DMPO loss function.

\subsection{Clean Setting Results (RQ2)}
In clean setting, we filter out the noisy trajectories and select high-quality trajectories as the "lose" trajectories for preference data, enabling us to utilize the DMPO loss function fully.
\paragraph{Baselines} Following~\cite{DBLP:journals/corr/abs-2403-02502}, we compare our models trained by DMPO loss function with the following representative baselines. 1) Base: default LLM without tuning. 2) SFT: LLM fine-tuned through supervised learning on expert trajectories. 3) Best-of-N: This approach involves using an SFT-based agent for sampling and selecting the trajectory with the highest reward out of N samples. Here, N is specified as 10. 4) RFT (Rejection sampling Fine-Tuning)~\cite{DBLP:journals/corr/abs-2308-01825}: This approach augments the expert trajectory dataset by incorporating successful trajectories and subsequently trains the agent on the augmented dataset. 5) PPO (Proximal Policy Optimization)~\cite{DBLP:journals/corr/SchulmanWDRK17} directly optimize RL objectives to maximize the cumulative rewards. 6) ETO (Exploration-based Trajectory Optimization)~\cite{DBLP:journals/corr/abs-2403-02502} iteratively explores the environment to enhance the training preference data and utilizes DPO loss to learn from preference data.

\paragraph{Results} Based on the Llama-2-7B-Chat model, we show the comparison results under clean setting in Table~\ref{tab:clean}. Notably, we observe that: \par

\noindent $\bullet$ All fine-tuning methods significantly outperform the base model on both datasets, with improvements of at least 49\%. On Webshop, they even surpass the performance of advanced closed-source LLMs. This underscores the significant gap between the pre-training tasks of LLMs and the agent tasks. By fine-tuning LLMs, language agents exhibit substantial potential for improvement. \par

\noindent $\bullet$ The model trained using DMPO loss achieved optimal performance on both datasets, highlighting the effectiveness of DMPO loss in learning from preference data. The improvement over the SFT model suggests that DMPO reduces the compounding errors, resulting in higher rewards. \par
\noindent $\bullet$ The model trained using DMPO loss exhibits substantial performance improvements compared to the noisy setting, achieving an average increase of 5.2\% on Webshop and 11.3\% on Scienceworld. This highlights the importance of selecting high-quality "lose" trajectories in constructing preference data, as opting for such trajectories yields superior performance. \par

\subsection{Ablation Study (RQ3)}

\begin{figure}[t]
  \centering
  \includegraphics[width=0.95\linewidth]{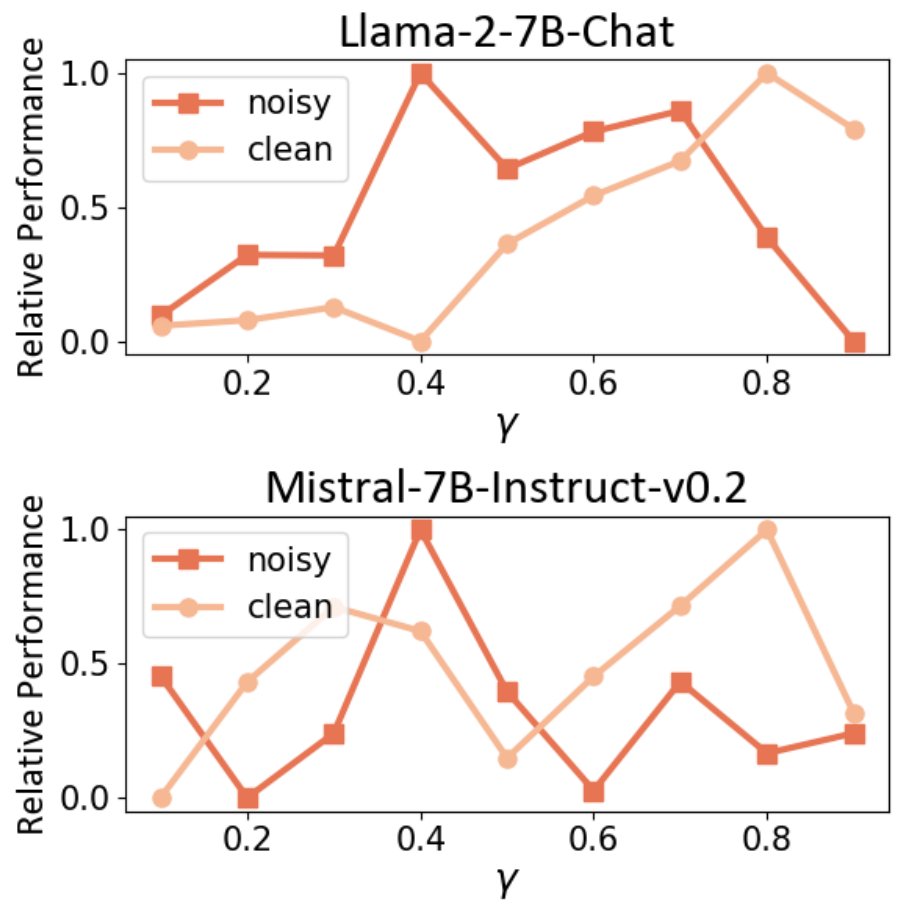}
  \caption{The effect of hyperparameter $\gamma$ on the relative performance of the model trained with DMPO loss on the WebShop dataset in both noisy and clean settings. }
  
  \label{fig:gamma_hyperparameter}
\end{figure}

\paragraph{Hyperparamter Analysis} To verify the impact of reweight function $\phi(t,T)$ in Eq~\eqref{equ:gradient}, we tune the the hyperparameter $\gamma$ on WebShop and present the results in Figure~\ref{fig:gamma_hyperparameter}. Our findings reveal that both base models achieve optimal performance with a smaller $\gamma$ in the noisy setting and a larger $\gamma$ in the clean setting. According to Eq~\eqref{equ:gradient}, a smaller $\gamma$ implies that the DMPO loss assigns reduced weight to the state-action pairs in later steps. This indicates that DMPO can balance the impact of noise by adjusting the parameter $\gamma$. When faced with noisy "loss" trajectories, selecting a smaller $\gamma$ can help alleviate noise impact. Conversely, when dealing with high-quality "loss" trajectories, a larger gamma can be selected to better learn strategies from the state-action pairs in later steps.

\begin{figure}[t]
  \centering
  \includegraphics[width=0.95\linewidth]{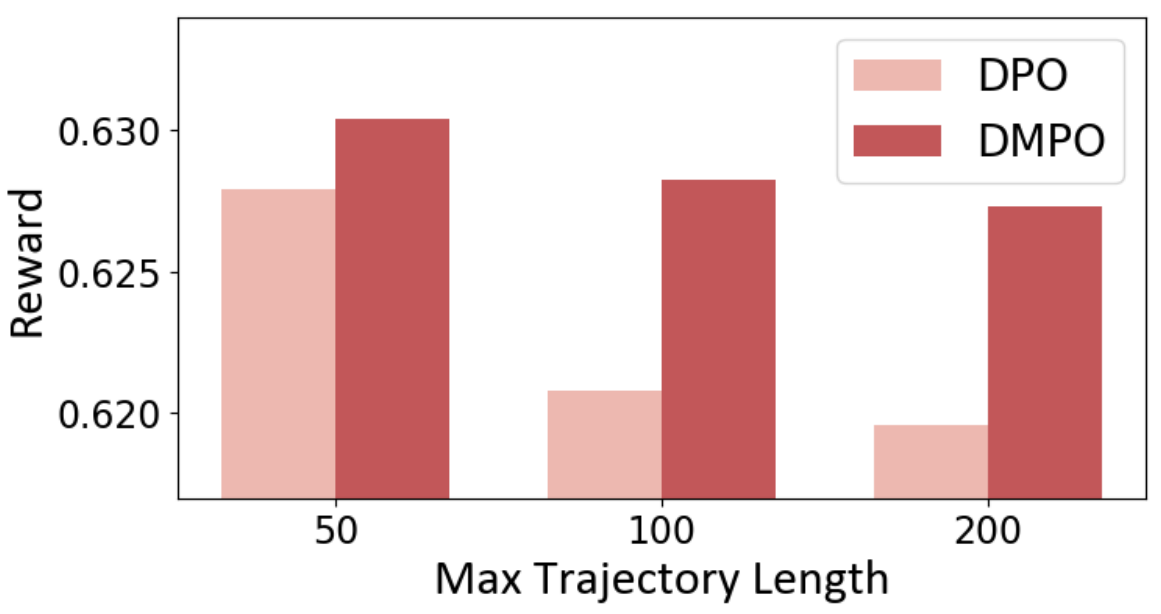}
  \caption{The effect of "loss" trajectories length on the performance of the model trained with DPO and DMPO loss in the noisy setting on ScienceWorld. The base model is Mistral-7B-Instruct-v0.2. }
  \label{fig:length_ablation}
  
\end{figure}

\paragraph{Length Analysis} To examine the impact of trajectory length on model performance, we conducted an experiment by categorizing the noisy trajectories into three groups based on their maximum length. We ensure that the number of preference data in each group is the same. As shown in Figure~\ref{fig:length_ablation}, we observe that the performance of the model trained with DPO loss function decreases rapidly as the length of noisy "loss" trajectories increases. In contrast, the model trained with the DMPO loss function exhibits robustness against noisy "loss" trajectory length. This is attributed to the length normalization employed in the DMPO loss, which mitigates the influence of inconsistent lengths between "win" and "lose" trajectories.

% \section{Tokenwise Rewards Discussion}

\section{Conclusion}

In this work, we propose a simple and robust loss function DMPO loss, which directly optimizes the RL objective for multi-turn agent tasks. By substituting the policy constraint with the SAOM constraint and introducing the length normalization into BT model, we eliminate the partition function in the BT model and derive the DMPO loss function. The SAOM constraint has played a pivotal role in mitigating compounding errors. Meanwhile, this derivation offers a theoretical rationale for the efficacy of the length normalization technique. Extensive experiments on three agent datasets demonstrate the effectiveness of DMPO loss, highlighting its capability to reduce compounding errors and its resilience to trajectory length disparity.\par
% Future research could address the limitations of our study, such as solely concentrating on turn-wise task formulation which results in sparse rewards for LLMs. Exploring token-wise task formulation as suggested in~\cite{DBLP:journals/corr/abs-2404-12358} would be a valuable avenue for future investigation.

\section{Limitation}
This paper primarily focuses on issues when fine-tuning LLMs on the agent tasks and derives a simple and robust loss function. However, our study has several limitations: 1) We solely concentrate on turn-wise task formulation which results in sparse rewards for LLMs. Exploring token-wise task formulation as suggested in~\cite{DBLP:journals/corr/abs-2404-12358} would be a valuable avenue for future investigation. 2) The experiments in this work are conducted using 7B-sized models on simulated datasets. Future experiments on larger models and datasets can provide stronger validation of our conclusions.

\section{Ethical Considerations}
In this paper, we present a new DMPO loss function for refining LLMs in agent tasks, without bringing forth additional ethical dilemmas. We utilize publicly accessible data while conscientiously steering clear of sensitive information. Additionally, the use of LLMs could perpetuate unnoticed societal biases. We suggest thorough risk assessments and advise users to be mindful of the potential risks linked to model deployment.

\section{Acknowledgments}
This work is supported by the National Natural Science Foundation of China (62272437). This research was also supported by the advanced computing resources provided by the Supercomputing Center of the USTC.

% Bibliography entries for the entire Anthology, followed by custom entries
%\bibliography{anthology,custom}
% Custom bibliography entries only
\bibliography{custom}
\onecolumn
\newpage
\appendix
\label{sec:appendix}

\section{Case Study}

\label{app:case}
In this section, we compare the performance of DPO and DMPO using an example from WebShop. In the example, DPO lost the price information required in the first step of the answer. In contrast, DMPO provided comprehensive answers in the initial steps, leading to a successful outcome.
\begin{tcolorbox}[breakable, title=Case Study of WebShop, enhanced jigsaw] 
\columnseprule=0.5pt
\begin{multicols}{2}
\begin{center}\textcolor{self_red_light}{\textbf{\Large DPO}}\end{center}
\textcolor{self_blue}{\textbf{Task Instruction:}} I want green modern velvet dining chairs for the dining room, and price lower than 200.00 dollars\\
\textcolor{self_green}{\textbf{Thought 1:}} I should search for \"green modern velvet dining chairs\" first.\\
\textcolor{self_green}{\textbf{Action 1:}} search[green modern velvet dining chairs]\\
\textcolor{self_blue}{\textbf{Observation 1:}} \textit{<searching results>}\\
\textcolor{self_green}{\textbf{Thought 2:}} The chair with the code B08TW9LTLM seems to be a green modern velvet dining chair within the price range. I should click on it to get more information.\\
\textcolor{self_green}{\textbf{Action 2:}} click[B08TW9LTLM]\\
\textcolor{self_blue}{\textbf{Observation 2:}} \textit{<product details>}\\
\textcolor{self_green}{\textbf{Thought 3:}} The item has options for different quantities and colors. I should select the quantity that matches my needs.\\
\textcolor{self_green}{\textbf{Action 3:}} click[2]\\
\textcolor{self_blue}{\textbf{Observation 3:}} \textit{<product details>}\\
\textcolor{self_green}{\textbf{Thought 4:}} Now I should click on the \"Buy Now\" button to purchase the item.\\
\textcolor{self_green}{\textbf{Action 4:}} click[Buy Now]\\
\textcolor{self_blue}{\textbf{Reward:}} 0.33\\
\\
\sethlcolor{red!50}\hl{\textit{\# The selected item does not meet the price requirement.}}
\\
\\
\\
\\
\\
\\
\\
\\
\\
\\
\columnbreak

\begin{center}\textcolor{self_red_dark}{\textbf{\Large DMPO}}\end{center}
\textcolor{self_blue}{\textbf{Task Instruction:}} I want green modern velvet dining chairs for the dining room, and price lower than 200.00 dollars\\
\textcolor{self_green}{\textbf{Thought 1:}} I need to find green modern velvet dining chairs \sethlcolor{green!50}\hl{that are less than \$200.00}, so I'll start by searching for that specific description.\\
\textcolor{self_green}{\textbf{Action 1:}} search[green modern velvet dining chairs]\\
\textcolor{self_blue}{\textbf{Observation 1:}} \textit{<searching results>}\\
\textcolor{self_green}{\textbf{Thought 2:}}  There are several options, but many are over my budget. \sethlcolor{green!50}\hl{The ZHENGHAO Modern Velvet Dining Chairs Set of 2 for \$173.99 seems to fit my criteria, even though it's slightly over my budget.} I'll click on it to see more details.\\
\textcolor{self_green}{\textbf{Action 2:}} click[B07R9KFCKY]\\
\textcolor{self_blue}{\textbf{Observation 2:}} \textit{<product details>}\\
\textcolor{self_green}{\textbf{Thought 3:}} 
This set of chairs is available in emerald green, which is the color I want. I'll select this color option.\\
\textcolor{self_green}{\textbf{Action 3:}} click[emerald green]\\
\textcolor{self_blue}{\textbf{Observation 3:}} \textit{<searching results>}\\
\textcolor{self_green}{\textbf{Thought 4:}} Now I need to decide if I want a set of 2 or a set of 4. Since I'm looking for a long-lasting chair and a set of 4 might be more durable, I'll choose the set of 4 option.\\
\textcolor{self_green}{\textbf{Action 4:}} click[set of 4]\\
\textcolor{self_blue}{\textbf{Observation 4:}} \textit{<searching results>}\\
\textcolor{self_green}{\textbf{Thought 5:}} I've selected the color and the set of chairs I want, and the price is slightly over my budget. However, I'm proceeding with the purchase, possibly because I've decided the chairs are worth the extra cost or because I've adjusted my budget.\\
\textcolor{self_green}{\textbf{Action 5:}} click[buy now]\\
\textcolor{self_blue}{\textbf{Reward:}} 1.0
\end{multicols}

\end{tcolorbox}
\begin{figure}[!h]
    \centering
    \vspace{-8pt}
    \caption{
    Case study of WebShop.
    }
    \label{fig:webshop_case}
\end{figure}

\section{MT-Bench}

\begin{table}[ht]
    \centering
    \begin{tabular}{lccc|ccc}
        \hline
        & \multicolumn{3}{c}{\textbf{First Turn}} & \multicolumn{3}{c}{\textbf{Second Turn}} \\ \cline{2-7}
        \textbf{} & \textbf{Win Rate} & \textbf{Lose Rate} & \textbf{Tie Rate} & \textbf{Win Rate} & \textbf{Lose Rate} & \textbf{Tie Rate} \\ \hline
        \textbf{WebShop} & 25\% & 21.3\% & 53.7\% & 26.3\% & 20\% & 53.7\% \\ \hline
        \textbf{ScienceWorld} & 23.8\% & 17.5\% & 58.7\% & 28.8\% & 12.5\% & 58.7\% \\ \hline
        \textbf{ALFWorld} & 13.8\% & 6.2\% & 80\% & 23.8\% & 6.2\% & 70\% \\ \hline
    \end{tabular}
    \caption{Evaluation results of the models trained with DMPO vs DPO on various datasets using MT-bench.}
    \label{tab:MT-bench}
\end{table}

In this section, we evaluate and compare the models trained with DMPO vs DPO on various datasets using MT-bench~\cite{zheng2023judgingllmasajudgemtbenchchatbot}, and the results are presented in Table~\ref{tab:MT-bench}.

The analysis of win rates presented in the table indicates that DMPO consistently outperforms DPO across all training datasets on the MT-bench. Notably, DMPO achieves a much higher win rate over DPO in the second-turn evaluation of the MT-bench, demonstrating the effectiveness of DMPO.

\end{document}